\title{Echoes of the Past: \\ A Unified Perspective on Fading memory and Echo States}
\author{Juan-Pablo Ortega\footnote{Division of Mathematical Sciences, School of Physical and Mathematical Sciences, Nanyang Technological University, Singapore} \qquad Florian Rossmannek$^*$}
\begin{document}

\maketitle

\begin{abstract}
	Recurrent neural networks (RNNs) have become increasingly popular in information processing tasks involving time series and temporal data.
A fundamental property of RNNs is their ability to create reliable input/output responses, often linked to how the network handles its memory of the information it processed.
Various notions have been proposed to conceptualize the behavior of memory in RNNs, including steady states, echo states, state forgetting, input forgetting, and fading memory.
Although these notions are often used interchangeably, their precise relationships remain unclear.
This work aims to unify these notions in a common language, derive new implications and equivalences between them, and provide alternative proofs to some existing results.
By clarifying the relationships between these concepts, this research contributes to a deeper understanding of RNNs and their temporal information processing capabilities.
\end{abstract}


\section{Introduction}

Recurrent neural networks (RNNs) have become increasingly popular as machine learning models in tasks involving time series and temporal data generated by dynamical systems.
What makes RNNs so popular in applications is their efficient scalability and ease of training, owing to model designs such as reservoir computing (RC).
Other machine learning paradigms such as large language models require vast amounts of training data and cost-intensive computational resources for the initial training as well as retraining sessions, which bring increasingly difficult challenges in its wake pertaining to energy consumption and cost of chatbot queries \cite{BogmansEtal2025, JiangSonneEtal2024}.
In contrast, RC is designed so that only a small part of the model is being trained algorithmically, the main part of the model being chosen randomly \cite{Jaeger2010, MaassNatschMarkram2002} or even realized by a physical system.
The latter has sparked a particular interest in the physics and robotics community, which explore different systems that can function as these reservoirs.
The first example of such a physical reservoir was a bucket of water, whose computational ability resides in translating input sound waves to ripples of waves on the surface of the water \cite{FernandoSojakka2003}.
Since then many other physical reservoirs have been proposed from photonics and analog circuits to mechanical and biological bodies, to name only a few \cite{Nakajima2020, TanakaEtal2019}.
A particularly promising direction is the use of quantum systems as reservoirs for so-called quantum RC \cite{FujiiNakajima2017, GhoshEtal2021, MujalEtal2021, QRC1, QRC2, QRC5}.

On the theoretical side, a fundamental property of RNNs is their ability to create reliable input/output responses.
This ability is often linked to how the network handles the information it processes -- or rather, forgets it.
Over the past few decades, various notions have been proposed to conceptualize the decay of its memory.
Prominent keywords are steady states \cite{ChuaGreen1976, BoydChua1985}, echo states, state forgetting, input forgetting \cite{Jaeger2010}, and fading memory \cite{BoydChua1985}.
It is a long-standing folklore belief that these notions can be used interchangeably.
Although some subtleties concerning this belief have been identified in recent years \cite{Manjunath2020ProcA, RC30, RC31}, a deeper understanding of the relationship among these notions is lacking.
Such a task is difficult in part because the notions are not even defined using the same terminology.
This gap will be filled in the present work, in which we unify all notions and present new results linking them.

Broadly speaking, an RNN processes a sequence of inputs $(\seq{u}{t})_t$ and produces a sequence of states $(\seq{x}{t})_t$, where at time $t$ the current input $\seq{u}{t}$ and the previous state $\seq{x}{t-1}$ are transformed into the next state $\seq{x}{t}$ by the so-called state map $f$.
In applications, it is a crucial part of the model design that the states are further transformed by a readout map to yield the final output.
But adding a readout does not affect the underlying dynamics or its memory of the processed information and is not of relevance in this work.

There is a large spectrum of modeling situations in which, in the words of Volterra \cite{Volterra1959}, an \textit{``extremely natural postulate is to suppose that the influence of the (input) a long time before the given moment gradually fades out"}.
Such circumstances are pervasive in physical systems with friction or dissipation and in econometrics, where most parametric models are designed to exhibit some sort of fading memory-type feature \cite{Diebold2024, Hamilton1994}.
Similar ideas appear in cybernetics, where adaptation to a changing environment requires feedback controllers to rely on current signals rather than outdated ones \cite{Ashby1956, Wiener2019}.
This same principle underlies modern adaptive algorithms in computer science:
in online learning, estimators employ decaying step sizes or explicit forgetting mechanisms to remain responsive to non-stationary data streams \cite{ShalevShwartz2012};
reinforcement learning algorithms impose exponential decay through temporal-difference methods and eligibility traces \cite{SuttonBarto2018};
and streaming algorithms and adaptive filters downweight older observations to efficiently track evolving patterns \cite{DatarEtal2002, Haykin2014}.
Across these fields, emphasizing recent information is not merely a heuristic but a formally justified response to environments in which the data-generating process itself changes over time.

This emphasis on recent information is also well established and standard practice in control and system-identification \cite{Sepulchre2021}, particularly when the underlying system is subject to time variation or unmodeled disturbances.
Optimal state estimators such as the Kalman filter assign greater weight to new measurements through the Kalman gain, which increases when model uncertainty grows and thus emphasizes the informational value of recent data \cite{Maybeck1982, Simon2006}.
Adaptive control and recursive parameter estimation use exponential forgetting factors and finite-memory estimators to enable effective tracking of systems whose characteristics drift over time \cite{AstromWittenmark2008, Ljung1999}.

In all the situations that we just described, one is naturally led to systems that are `input forgetting' \cite{Jaeger2010, RC9}.
The states, in this case, have been coined `steady states' or `steady state solution' in earlier works \cite{ChuaGreen1976, BoydChua1985} and `echo states' in later works \cite{Jaeger2010}.
We caution that sometimes the terminology `echo states' includes the requirement that the sequence of states is unique for each given input sequence.

Even though the same modeling intuition underlies the notions of `input forgetting' and `fading memory', their mathematical formulations differ.
Customarily, fading memory is defined as continuous dependence of the echo states on the input sequence \cite{BoydChua1985, RC9}.
On first inspection, it may not be clear how continuous dependence captures the idea of fading memory.
The key to this definition is the choice of topology with respect to which continuity is required.
Indeed, some topologies on the space of input sequences boast the property that two sequences are close if their recent entries are close even if their distant past entries differ significantly.
Since there are several topologies with this property, there are different notions of fading memory depending on the choice of topology.
Although they are closely related to each other and sometimes overlap, various choices give rise to non-identical notions of fading memory \cite{RC30,RC31}.
The notions of input forgetting and fading memory are concerned with echo states of distinct input sequences that are similar to each other in the recent past.
On the contrary, the notion of `state forgetting' considers a single input sequence and two different initial states.
If the next states produced by the state map become increasingly similar to each other as time evolves regardless of the dissimilarity of the initial states, then the system is state forgetting \cite{Jaeger2010, RC9}.
The initial states may not be part of the sequence forming the input's echo states, but some form of attraction is at play, where the next states obtained from the initial ones approach the echo states.
Indeed, one of the overarching principles governing all these various notions is that the echo states form the dynamical attractor of the RNN to which the states converge \cite{Jaeger2010, CeniEtal2020PhysD, ManjunathJaeger2013, RC31}.

There had been a string of works that derived sufficient conditions for the existence of unique echo states, and it was observed that fading memory as well as various input and state forgetting properties held as well \cite{QRC1, RC7, RC9, RC28}.
However, in those results, all these properties were the consequence of a single sufficient criterion (always some form of contractivity of the state map), and it had therefore not been unveiled whether the notions imply each other independently of the common sufficient criterion.

In this work, we unify the various notions surveyed above in a common language, in which it will become easier to see how they are linked to forward and pullback attraction of the echo states.
We will derive new implications and equivalences between several notions, and along the way we will encounter alternative and shorter proofs to some of the classical results in the literature.
The main notions and statements are discussed in \cref{sec_main}, and they will be related to notions from the literature in \cref{sec_unifying}.

\section{Echoes of the past}
\label{sec_main}

\subsection{Mathematical framework}

In the following, we fix the notation and the underlying mathematical objects that will be used throughout.
We denote by $\Z$ the set of integers, by $\N$ the set of strictly positive integers, $\N_0 = \N \cup \{0\}$, and $\Z_- = \Z \backslash \N$.
Subsets of topological spaces and products of topological spaces are endowed with the subspace topology and the product topology, respectively.
Throughout, let $\Uc$ be a Hausdorff space and $(\Xc,d_{\Xc})$ be a metric space.
Sequences are denoted as underlined letters, e.g.\ $\Seq{u} = (\seq{u}{t})_{t \in \Z_-} \in \Uc^{\Z_-}$.
We denote the right-shift operator $(\seq{u}{t})_t \mapsto (\seq{u}{t-1})_t$ on all left- and bi-infinite sequence spaces by the letter $T$ and the left-shift operator $(\seq{u}{t})_t \mapsto (\seq{u}{t+1})_t$ on bi-infinite sequence spaces by $\sigma$.
Fix backwards shift-invariant subsets $\US^- \subseteq \Uc^{\Z_-}$ and $\XS^- \subseteq \Xc^{\Z_-}$, that is, $T^{-1}(\US^-) = \US^-$ and likewise for $\XS^-$, and fix a shift-invariant subset $\US \subseteq \Uc^{\Z}$ that is mapped surjectively onto $\US^-$ by the truncation, that is, $\sigma(\US) = \US$ and $\tau(\US) = \US^-$, where $\tau \colon \US \rightarrow \US^-$, $\Seq{u} \mapsto (\seq{u}{t})_{t \in \Z_-}$.
In addition, we assume that $\US$ contains all sequences of the form $(\dots,\seq{u}{-1}^-,\seq{u}{0}^-,\seq{u}{1},\seq{u}{2},\dots)$ with $\Seq{u}^- \in \US^-$ and $\Seq{u} \in \US$.

\subsection{Echo states and fading memory}
\label{sec_ESP_FMP}

Consider the state-space system governed by a continuous state map $f \colon \Xc \times \Uc \rightarrow \Xc$.
A {\bfi solution for the input} $\Seq{u} \in \US^-$ is an element $\Seq{x} \in \XS^-$ that satisfies $\seq{x}{t} = f(\seq{x}{t-1},\seq{u}{t})$ for all $t \in \Z_-$.
We also call the pair $(\Seq{x},\Seq{u})$ a {\bfi solution}.
The set of all solutions will be denoted
\begin{equation*}
	\Sc
	= \{ (\Seq{x},\Seq{u}) \in \XS^- \times \US^- \colon \seq{x}{t} = f(\seq{x}{t-1},\seq{u}{t}) \text{ for all } t \in \Z_- \}.
\end{equation*}
We say that the state-space system has the {\bfi echo state property (ESP)} if there exists a unique solution $\Seq{x} \in \XS^-$ for any given input $\Seq{u} \in \US^-$.
In the literature, this definition of the ESP is sometimes called the {\bfi unique solution property} \cite{Manjunath2022Nonlin}.
As mentioned in the introduction, in the presence of the ESP, we say that the state-space system has the {\bfi fading memory property (FMP)} if the map $\US^- \rightarrow \XS^-$ that associates to any given input its unique solution is continuous.
The FMP can be defined without presupposing the ESP \cite{Manjunath2020ProcA, RC31} but that generalization is not of relevance here.
We discussed in the introduction that different choices of topologies on the sequence spaces give rise to different notions of FMP.
In this work, we use the product topologies.
The resulting FMP was termed {\bfi product FMP} in \cite{RC30}.
The FMP originally considered in \cite{BoydChua1985} was based on the topology induced by a weighted sup-norm.
Since the product topology is at least as coarse as the topology induced by a weighted sup-norm \cite{RC7}, the product FMP always implies the FMP in \cite{BoydChua1985}.
In that sense, the product FMP is the strongest notion of FMP encountered in the literature (topologies coarser than the product topology make coordinate projections discontinuous and are therefore never used).
Restricting our attention to the strongest FMP is satisfactory for the results we present in this work.
For detailed discussions and results about the different FMPs, we refer the reader to \cite{RC9, RC30, RC31}.
Lastly, we point out that compactness of $\XS^-$ or $\US$ with respect to the product topology, which we encounter later as assumptions, is a weaker assumption than compactness with respect to a finer topology.
Thus, by choosing the product topology, we get the strongest FMP and the weakest assumptions later.

\subsection{State and input forgetting}
\label{sec_forgetting}

Consider the family of functions $\psi_n \colon \Xc \times \US \rightarrow \Xc$, $n \in \N$, defined recursively by $\psi_1(x,\Seq{u}) = f(x,\seq{u}{1})$ and $\psi_n(x,\Seq{u}) = f(\psi_{n-1}(x,\Seq{u}),\seq{u}{n})$.
A variation of this family of functions has appeared in \cite{RC9} under the name `reservoir flow'.
We will define state and input forgetting properties based on the relations
\begin{equation}
\label{FP}
	\limsup_{n \rightarrow \infty} d_{\Xc}( \psi_n(x,\Seq{u}) , \psi_n(x',\Seq{u}) )
	= 0
\end{equation}
and
\begin{equation}
\label{sFP}
	\limsup_{n \rightarrow \infty} d_{\Xc}( \psi_n(x,T^n(\Seq{u})) , \psi_n(x',T^n(\Seq{u})) )
	= 0,
\end{equation}
respectively.
A state is called {\bfi reachable} if it appears as the time zero component of some solution.
We denote the set of all reachable states by
\begin{equation*}
	\Rc
	= \{ x \in \Xc \colon \text{there exists } (\Seq{x},\Seq{u}) \in \Sc \text{ with } \seq{x}{0} = x \}
\end{equation*}
and point out that $\psi_n(\Rc \times \US) = \Rc$ for all $n \in \N$.

\begin{definition}
	We say that the state-space system has
\begin{enumerate}[\upshape (i)]\itemsep=0em
\item
... the {\bfi state forgetting property (SFP)} if \eqref{FP} holds for all $\Seq{u} \in \US$ and $x,x' \in \Xc$.

\item
... the {\bfi input forgetting property (IFP)} if \eqref{FP} holds for all $\Seq{u} \in \US$ and $x,x' \in \Rc$.

\item
... the {\bfi shifted state forgetting property (s-SFP)} if \eqref{sFP} holds for all $\Seq{u} \in \US$ and $x,x' \in \Xc$.

\item
... the {\bfi shifted input forgetting property (s-IFP)} if \eqref{sFP} holds for all $\Seq{u} \in \US$ and $x,x' \in \Rc$.
\end{enumerate}
We add the attribute {\bfi state-uniform (uniform)} if the convergence of the limit superior is uniform in $x,x'$ (in $\Seq{u},x,x'$).
\end{definition}

\begin{remark}
\label{rem_SFP_IFP}
	The following implications and equivalences hold by definition.
\begin{center}
%
\begin{tikzcd}[row sep = 2em, column sep = 2em]
	\text{SFP} \arrow{d} \pgfmatrixnextcell \arrow{l} \text{state-unif.\ SFP} \arrow{d} \pgfmatrixnextcell \arrow{l} \text{unif.\ SFP} \arrow{d} \arrow{r} \pgfmatrixnextcell \arrow{l} \text{unif.\ s-SFP} \arrow{d} \arrow{r} \pgfmatrixnextcell \text{state-unif.\ s-SFP} \arrow{d} \arrow{r} \pgfmatrixnextcell \text{s-SFP} \arrow{d}
	\\
	\text{IFP} \pgfmatrixnextcell \arrow{l} \text{state-unif.\ IFP} \pgfmatrixnextcell \arrow{l} \text{unif.\ IFP} \arrow{r} \pgfmatrixnextcell \arrow{l} \text{unif.\ s-IFP} \arrow{r} \pgfmatrixnextcell \text{state-unif.\ s-IFP} \arrow{r} \pgfmatrixnextcell \text{s-IFP}
\end{tikzcd}%
\end{center}
\end{remark}

From the given definition, it may not be immediately apparent why the IFP indeed conceptualizes `input forgetting'.
To see how the IFP is motivated, consider $x^1,x^2 \in \Rc$.
Take solutions $(\Seq{x}^1,\Seq{u}^1), (\Seq{x}^2,\Seq{u}^2) \in \XS^- \times \US^-$ with $\seq{x}{0}^1 = x^1$ and $\seq{x}{0}^2 = x^2$.
Then, $\psi_n(x^i,\Seq{u})$ is exactly the time zero component of a solution for the input $\Seq{u}^{i,n} := (\dots,\seq{u}{-1}^i,\seq{u}{0}^i,\seq{u}{1},\dots,\seq{u}{n})$.
The $n$ most recent inputs in the sequences $\Seq{u}^{1,n}$ and $\Seq{u}^{2,n}$ are the same.
Thus, although the IFP is defined through states $x^1,x^2 \in \Rc$, it really considers solutions of different input sequences whose difference lies further and further in the past as $n$ grows; see also \cref{fig_FPs}.

\begin{figure*}
\includegraphics[width=\textwidth]{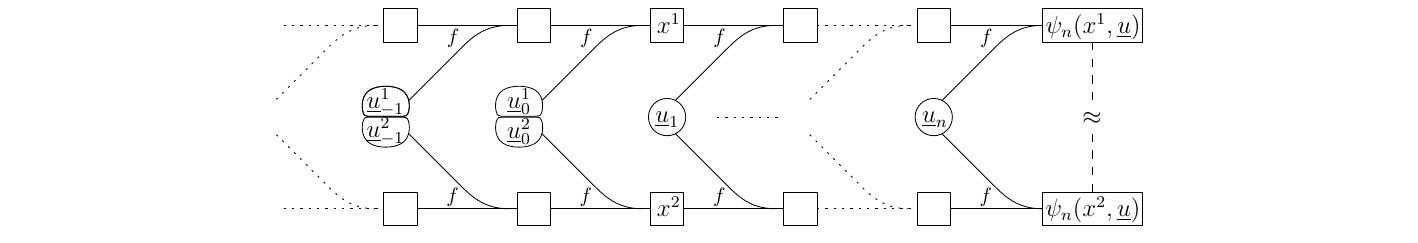}
\includegraphics[width=\textwidth]{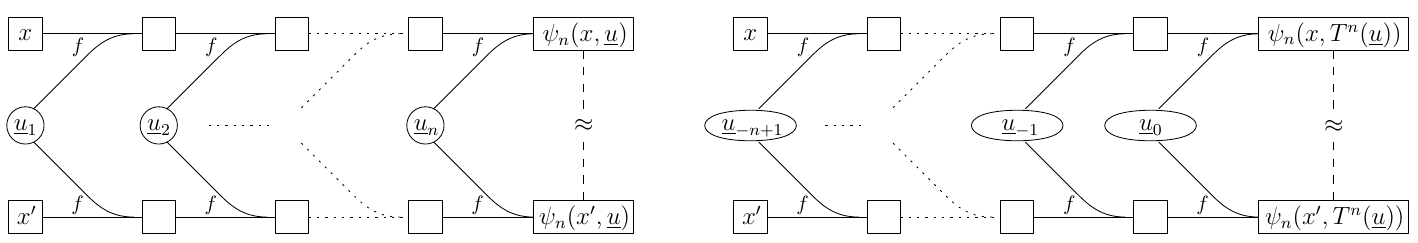}
\vspace*{-25pt}
\caption{\label{fig_FPs} Illustrations of the IFP (top), SFP (left), and s-SFP (right).}
\end{figure*}

\subsection{Attraction}

We hinted in the introduction that one of the overarching principles is that the echo states form the dynamical attractor of the RNN to which the states converge.
By the invariance $\psi_n(\Rc \times \US) = \Rc$, the SFP yields for all $\Seq{u} \in \US$ and $x \in \Xc$\footnote{
Here, $\mathrm{dist}_{\Xc}(x,A) = \inf_{y \in A} d_{\Xc}(x,y)$ for a point $x \in \Xc$ and a subset $A \subseteq \Xc$.}
\begin{equation*}
	\limsup_{n \rightarrow \infty} \mathrm{dist}_{\Xc}( \psi_n(x,\Seq{u}) , \Rc )
	\leq \limsup_{n \rightarrow \infty} d_{\Xc}( \psi_n(x,\Seq{u}) , \psi_n(x',\Seq{u}) )
	= 0,
\end{equation*}
where $x'$ is an arbitrary element in $\Rc$.
Likewise, the s-SFP yields
\begin{equation*}
	\limsup_{n \rightarrow \infty} \mathrm{dist}_{\Xc}( \psi_n(x,T^n(\Seq{u})) , \Rc )
	\leq \limsup_{n \rightarrow \infty} d_{\Xc}( \psi_n(x,T^n(\Seq{u})) , \psi_n(x',T^n(\Seq{u})) )
	= 0.
\end{equation*}
Although both the SFP and the s-SFP imply attraction of $\Rc$, we now argue that the SFP captures forward attraction and the s-SFP pullback attraction.
To make this claim rigorous, let us specify the precise dynamical system.
Despite being intrinsically non-autonomous, it was proposed in \cite{RC31} to model the dynamics autonomously on an extended sequence space.
To this end, consider the map $\varphi \colon \XS^- \times \US \rightarrow \XS^- \times \US$ given by $\varphi(\Seq{x},\Seq{u}) = ((\Seq{x},f(\seq{x}{0},\seq{u}{1})),\sigma(\Seq{u}))$, whose iterates can be seen as extensions of the maps $\psi_n$. 
Concretely, $\psi_n \circ (p_0 \times \pi) = p_0 \circ \varphi^n$, where $p_0 \colon \XS^- \times \US \rightarrow \Xc$ and $\pi \colon \XS^- \times \US \rightarrow \US$ are the projections $p_0(\Seq{x},\Seq{u}) = \seq{x}{0}$ and $\pi(\Seq{x},\Seq{u}) = \Seq{u}$.
It has been shown in \cite{RC31} that the global attractor $\Ac := \bigcap_{n \in \N} \varphi^n(\XS^- \times \US)$ of $\varphi$ is exactly equal to the inverse image $\tau^{-1}(\Sc)$ of the set of solutions under the truncation $\tau \colon \XS^- \times \US \rightarrow \XS^- \times \US^-$ extended to the product space.
In particular, the set of reachable states $\Rc = p_0(\Ac)$ is the $p_0$-projection of the attractor $\Ac$.
The fibers $\Ac \cap \pi^{-1}(\Seq{u})$ of the attractor capture pullback attraction, and the fibers $\Ac \cap \pi^{-1}(\sigma^n(\Seq{u}))$ capture forward attraction.
The relation $\psi_n \circ (p_0 \times \pi) = p_0 \circ \varphi^n$ implies that $\psi_n(x,\Seq{u})$ belongs to the $p_0$-projection of the fiber $\pi^{-1}(\sigma^n(\Seq{u}))$, and $\psi_n(x,T^n(\Seq{u}))$ belongs to the $p_0$-projection of the fiber $\pi^{-1}(\Seq{u})$, which together with the equality $\Rc = p_0(\Ac)$ finally explains the association of the SFP with forward and of the s-SFP with pullback attraction.\footnote{
One can also consider the attractor $\Ac_0 := \bigcap_{n \in \N} \psi_n(\Xc \times \US)$ of the non-autonomous dynamics of $\psi_n$.
But since the inclusion $p_0(\Ac) \subseteq \Ac_0$ is strict in general, the attractor $\Ac$ gives a more detailed picture.}

\begin{remark}
\label{rem_DSobs_inputs}
	If the inputs are known to be generated by a deterministic dynamical system $\phi \colon \Mc \rightarrow \Mc$, transformed by an observation function $\omega \colon \Mc \rightarrow \Uc$, then instead of the maps $\psi_n$ defined on $\Xc \times \US$ one would consider the alternative maps $\hat{\psi}_n \colon \Xc \times \Mc \rightarrow \Mc$ defined by $\hat{\psi}_1(x,p) = f(x,\omega \circ \phi(p))$ and $\hat{\psi}_n(x,p) = f(\hat{\psi}_{n-1}(x,p),\omega \circ \phi^n(p))$.
However, the (shifted) SFP and IFP defined through these maps $\hat{\psi}_n$ are equivalent to the (shifted) SFP and IFP defined through $\psi_n$ with $\US = \{ (\omega(\phi^t(p)))_{t \in \Z} \colon p \in \Mc \}$.
Our setup working with $\psi_n$ on $\Xc \times \US$ covers the most general case of possible inputs.
\end{remark}

\subsection{Main result}

In our main result, stated below, we extend the diagram in \cref{rem_SFP_IFP} with additional implication arrows under compactness assumptions.
It generalizes \cite[Proposition 1]{Jaeger2010}, \cite[Theorem 1]{Manjunath2022Nonlin}, and \cite[Remark 1]{KobayashiTranNakajima2024}, and provides alternative and shorter proofs.

\begin{theorem}
\label{thrm_main}
	Consider the diagram below.
If $\XS^-$ is compact, then the implication arrows indicated in blue are valid;
and if $\XS^-$ and $\US$ are compact, then the implication arrows indicated in green are valid.
Furthermore, we will see counterexamples to the implication arrows indicated in purple.
\begin{center}
%
\begin{tikzcd}[row sep = 2em, column sep = 2em]
	\text{SFP} \arrow[purple, shift right=1, "/"{anchor=center,sloped}]{r} \arrow{d} \pgfmatrixnextcell \arrow[shift right=1]{l} \text{state-unif.\ SFP} \arrow{d} \pgfmatrixnextcell \arrow{l} \text{unif.\ SFP} \arrow[shift right=1]{d} \arrow{r} \pgfmatrixnextcell \arrow{l} \text{unif.\ s-SFP} \arrow[shift right=1]{d} \arrow[shift right=1]{r} \pgfmatrixnextcell \arrow[green, shift right=1]{l} \text{state-unif.\ s-SFP} \arrow[shift right=1]{d} \arrow[shift right=1]{r} \pgfmatrixnextcell \arrow[purple, shift right=1, "/"{anchor=center,sloped}]{l} \text{s-SFP} \arrow{d}
	\\
	\text{IFP} \arrow[purple, shift right=1, "/"{anchor=center,sloped}]{r} \pgfmatrixnextcell \arrow[shift right=1]{l} \text{state-unif.\ IFP} \pgfmatrixnextcell \arrow{l} \text{unif.\ IFP} \arrow[green, shift right=1]{u} \arrow{r} \pgfmatrixnextcell \arrow{l} \text{unif.\ s-IFP} \arrow[green, shift right=1]{u} \arrow[shift right=1]{r} \pgfmatrixnextcell \arrow[green, shift right=1]{l} \text{state-unif.\ s-IFP} \arrow[shift right=1]{r} \arrow[blue, shift right=1]{d} \arrow[blue, shift right=1]{u} \pgfmatrixnextcell \arrow[purple, shift right=1, "/"{anchor=center,sloped}]{l} \text{s-IFP}
	\\
	\pgfmatrixnextcell \pgfmatrixnextcell \pgfmatrixnextcell \pgfmatrixnextcell \text{ESP} \arrow[blue, shift right=1]{u} \arrow[blue]{r} \pgfmatrixnextcell \text{FMP}
\end{tikzcd}%
\end{center}
\end{theorem}

\begin{remark}
	We point out that $\XS^-$ is compact if and only if $\Xc$ is compact and $\XS^-$ is a closed subset of $\Xc^{\Z_-}$.
Likewise, $\US$ is compact if and only if $\Uc$ is compact and $\US$ is a closed subset of $\Uc^{\Z}$.
\end{remark}

\subsection{Technical details}

Let us prove \cref{thrm_main}.
That the ESP implies the FMP if $\XS^-$ is compact has been proved in \cite{Manjunath2020ProcA, RC31}.
In fact, we mentioned in \cref{sec_ESP_FMP} that fading memory can also be defined without presupposing the ESP, which involves generalizing continuity of the solution map to a set-valued solution map.
In this case, the reverse implication -- that the FMP implies the ESP -- holds as soon as there is some input for which there exists exactly one solution \cite{Manjunath2020ProcA, RC31}.
All remaining implications in \cref{thrm_main} follow from the next two lemmas.

\begin{lemma}
\label{IFP_ESP}
	Suppose the state-space system has the state-uniform s-IFP.
Then, there is at most one solution for each input.
In particular, if $\XS^-$ is compact, then the state-space system has the ESP.
\end{lemma}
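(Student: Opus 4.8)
The plan is to split the statement into its two halves: first, that no input admits two distinct solutions --- the substantive step, where the state-uniform s-IFP is used --- and second, the upgrade from ``at most one'' to ``exactly one'' when $\XS^-$ is compact, which is a routine compactness argument supplying existence.

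For the first half, fix an input $\Seq{u}^- \in \US^-$ and suppose $\Seq{x}^1, \Seq{x}^2 \in \XS^-$ are both solutions for it; the goal is $\seq{x}{t}^1 = \seq{x}{t}^2$ for every $t \in \Z_-$. The crucial observation is that a solution stays a solution after right-shifting: since $\XS^-$ and $\US^-$ are backwards shift-invariant and the defining relation is shift-compatible, a direct check shows that $T^n(\Seq{x}^1)$ and $T^n(\Seq{x}^2)$ are solutions for the input $T^n(\Seq{u}^-)$, for every $n \in \N$; in particular their time-zero components $\seq{x}{-n}^1$ and $\seq{x}{-n}^2$ are reachable. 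Next, choose $\Seq{u} \in \US$ with $\tau(\Seq{u}) = \Seq{u}^-$, which exists since $\tau$ is surjective. The first $n$ entries of $T^n(\Seq{u})$ are exactly $\seq{u}{1-n}^-, \dots, \seq{u}{0}^-$, so unrolling the recursion defining $\psi_n$ and applying the defining relation of the solutions step by step gives $\psi_n(\seq{x}{-n}^i, T^n(\Seq{u})) = \seq{x}{0}^i$ for $i = 1, 2$. Feeding the reachable states $\seq{x}{-n}^1, \seq{x}{-n}^2$ into the state-uniform s-IFP for $\Seq{u}$ then yields, for every $n$,
\begin{equation*}
\begin{split}
	d_{\Xc}(\seq{x}{0}^1, \seq{x}{0}^2)
	&= d_{\Xc}( \psi_n(\seq{x}{-n}^1, T^n(\Seq{u})) , \psi_n(\seq{x}{-n}^2, T^n(\Seq{u})) )
	\\
	&\leq \sup_{x, x' \in \Rc} d_{\Xc}( \psi_n(x, T^n(\Seq{u})) , \psi_n(x', T^n(\Seq{u})) ) ,
\end{split}
\end{equation*}
whose right-hand side tends to $0$ as $n \rightarrow \infty$; since the left-hand side is independent of $n$, we conclude $\seq{x}{0}^1 = \seq{x}{0}^2$. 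State-uniformity is essential here: the points $\seq{x}{-n}^1, \seq{x}{-n}^2$ move with $n$, so the plain s-IFP, with the two states held fixed, would not close the estimate. To obtain equality at an arbitrary negative time $-m$, $m \in \N_0$, I would re-run this argument verbatim with the solutions $T^m(\Seq{x}^1), T^m(\Seq{x}^2)$ for the input $T^m(\Seq{u}^-)$ and with the bi-infinite extension $\sigma^{-m}(\Seq{u}) \in \US$, which truncates to $T^m(\Seq{u}^-)$; this gives $\seq{x}{-m}^1 = \seq{x}{-m}^2$, hence $\Seq{x}^1 = \Seq{x}^2$.

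For the second half, fix $\Seq{u}^- \in \US^-$ and assume $\XS^-$ compact. The plan is the standard diagonal argument: for each $N$ take an approximant $\Seq{x}^{(N)} \in \XS^-$ satisfying $\seq{x}{t}^{(N)} = f(\seq{x}{t-1}^{(N)}, \seq{u}{t}^-)$ on $-N < t \leq 0$ (obtained by fixing a state at time $-N$ and iterating $f$ forward), use that $\XS^-$ is compact and metrizable --- being a subspace of the countable product $\Xc^{\Z_-}$ --- to extract a subsequence converging in the product topology to some $\Seq{x} \in \XS^-$, and pass to the limit in the recursion using continuity of $f$, which makes $\Seq{x}$ a solution for $\Seq{u}^-$; alternatively, non-emptiness of the solution set for every input under compactness is implicit in the set-valued solution map framework of \cite{Manjunath2020ProcA, RC31}. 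Together with the uniqueness from the first half, this gives the ESP. I expect the first half to be the real obstacle --- specifically, recognising simultaneously that the past-truncations $\seq{x}{-n}^i$ of a solution are reachable states and that $\psi_n$ applied to them, with the correspondingly shifted input, reproduces $\seq{x}{0}^i$; once these are combined with the \emph{state-uniform} (rather than merely pointwise) s-IFP, the displayed estimate closes immediately, and the existence half is entirely routine.
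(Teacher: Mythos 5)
Your proposal is correct and follows essentially the same route as the paper: uniqueness is obtained by writing $\seq{x}{t}^i = \psi_n(\seq{x}{t-n}^i, T^{n}(\cdot))$ for a suitably shifted input and letting state-uniformity of the s-IFP close the estimate as the base points $\seq{x}{t-n}^i$ vary with $n$ (your explicit check that these are reachable is a detail the paper leaves implicit). For existence under compactness you use approximate solutions plus sequential compactness and continuity of $f$, whereas the paper phrases the same standard argument as non-emptiness of the nested compact fibers $\varphi^n(\XS^- \times \pi^{-1}(T^n(\Seq{u})))$ of the attractor; the two are interchangeable.
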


\begin{proof}
	Suppose $\Seq{x},\Seq{x}' \in \XS^-$ are two solutions for a given input $\Seq{u}^- \in \US^-$.
Fix $t \in \Z_-$.
Take any $\Seq{u} \in \tau^{-1}(\Seq{u}^-)$ and note that $\psi_n(\seq{x}{t-n},T^{n-t}(\Seq{u})) = \seq{x}{t}$.
By the s-IFP applied to $T^{-t}(\Seq{u})$,
\begin{equation*}
	d_{\Xc}( \seq{x}{t} , \seq{x}{t}' )
	= d_{\Xc}( \psi_n(\seq{x}{t-n},T^{n-t}(\Seq{u})) , \psi_n(\seq{x}{t-n}',T^{n-t}(\Seq{u})) )
\end{equation*}
converges to zero since the convergence is state-uniform.
Thus, $\Seq{x} = \Seq{x}'$.
If $\XS^-$ is compact, then the fibers of the attractor $\Ac$ are non-empty by a standard topology argument.
Indeed, each $\Ac \cap \pi^{-1}(\Seq{u})$ becomes a nested intersection of the non-empty compact sets $\varphi^n( \XS^- \times \pi^{-1}(T^n(\Seq{u})) )$.
The existence of at least one solution for each input is equivalent to $\Ac$ having non-empty fibers.
\end{proof}

\begin{lemma}
\label{lem_ESP_SFP}
	Suppose the state-space system has the ESP.
\begin{enumerate}[\upshape (i)]\itemsep=0em
\item
If $\XS^-$ is compact, then the state-space system has the state-uniform s-SFP.
\item
If $\US$ and $\XS^-$ are compact, then the state-space system has the uniform s-SFP.
\end{enumerate}
\end{lemma}

\begin{proof}
	(ii)
Suppose for contradiction the uniform s-SFP does not hold, that is, there exists an $\epsilon > 0$, a strictly increasing sequence $(n_k)_k \subseteq \N$, and sequences $(\Seq{u}^k)_k \subseteq \US$, $(x_k)_k,(x_k')_k \subseteq \Xc$ with
\begin{equation*}
	d_{\Xc}( \psi_{n_k}(x_k,T^{n_k}(\Seq{u}^k)) , \psi_{n_k}(x_k',T^{n_k}(\Seq{u}^k)) )
	\geq \epsilon.
\end{equation*}
Take $\Seq{x}^k,\Seq{\hat{x}}^k \in \XS^-$ with $\seq{x}{0}^k = x_k$ and $\seq{\hat{x}}{0}^k = x_k'$.
By compactness, $\varphi^{n_k}(\Seq{x}^k,T^{n_k}(\Seq{u}^k))$ and $\varphi^{n_k}(\Seq{\hat{x}}^k,T^{n_k}(\Seq{u}^k))$ have some accumulation points $(\Seq{x},\Seq{u})$ and $(\Seq{x}',\Seq{u}')$ satisfying $\Seq{u} = \Seq{u}'$.
Since these accumulation points necessarily belong to $\Ac$, the ESP implies $\Seq{x} = \Seq{x}'$.
However, this contradicts $d_{\Xc}( p_0(\Seq{x}) , p_0(\Seq{x}') ) \geq \epsilon$, which follows from the relation $\psi_n \circ (p_0 \circ \pi) = p_0 \circ \varphi^n$.

\noindent (i)
The same argument as for (ii) with $\Seq{u}^k = \Seq{u}$ not depending on $k$ yields the state-uniform s-SFP since then $\varphi^{n_k}(\Seq{x}^k,T^{n_k}(\Seq{u})) \in \XS^- \times \{\Seq{u}\}$ still belongs to a compact space and any accumulation point thereof must belong to $\Ac \cap \pi^{-1}(\Seq{u})$.
\end{proof}

The following example shows that the implication arrows in \cref{thrm_main} indicated in purple do not hold in general, even if the underlying spaces are compact and connected.

\begin{example}
	Consider the homeomorphism $x \mapsto x^2$ of the unit interval $[0,1]$.
Since the endpoints of the interval are fixed points, this map induces a homeomorphism $g \colon S^1 \rightarrow S^1$ of the unit circle.
Note that $g^n(x)$ converges to 0 as $n \rightarrow \infty$ for all $x \in S^1$.
Thus, the state-space system governed by $f \colon S^1 \times \Uc \rightarrow S^1$, $(x,u) \mapsto g(x)$ has the SFP.
On the other hand, the SFP is not state-uniform.
Every state is reachable since $g$ is a homeomorphism, that is, $\Rc = S^1$.
This implies that the ESP does not hold and that the IFP coincides with the SFP.
Lastly, if we take $\US$ to contain only constant sequences, then the s-SFP and s-IFP also coincide with the SFP.
We remark that the state map can be perturbed to no longer be input-independent but retain all the other properties of $f$.
\end{example}

\begin{remark}
	If the inputs are observations of a hidden dynamical system as in \cref{rem_DSobs_inputs}, then the maps $\psi_n(x,T^n(\Seq{u}))$ recover the `echo state family' considered in \cite{HartHookDawes2020}.
Therein, it was shown that this echo state family converges under a contractivity condition on the state map, which guarantees the ESP.
Since \cite{HartHookDawes2020} assumes compactness, it follows from our results that the contractivity condition is not necessary.
Indeed, as soon as the ESP holds, the s-SFP implies convergence of the echo state family.
\end{remark}

\subsection{The significance of forgetting states}

The SFP bears particular significance for applications.
Suppose we are trying to learn an unknown invertible dynamical system as in \cref{rem_DSobs_inputs}, that is, the inputs are of the form $\seq{u}{t} = \omega(\phi^t(p))$, and at the core of our learning method is the state-space system $f$.
For example, $f$ could be an echo state network or the update map of a reservoir computer \cite{Jaeger2010}.
On an abstract level, successful learning manifests in the existence of a semi-conjugacy $\zeta \colon \Mc \rightarrow \Xc$, (meaning $f(\zeta(p),\omega(\phi(p))) = \zeta(\phi(p))$ for all $p \in \Mc$) that, moreover, is an embedding.
Such a semi-conjugacy is called a generalized synchronization (GS) \cite{KocarevParlitz1996, LuHuntOtt2018Chaos}.
A prerequisite for the existence of a GS is the ESP of the system.
If the GS is an embedding, then it is, in fact, a homeomorphism from $\Mc$ to $\Rc$ \cite{RC31} that, under certain hypotheses, becomes differentiable \cite{RC18}.
A trajectory $(\phi^t(p))_{t \in \Z_-}$ is synchronized through $\zeta$ with the unique solution $\Seq{x}$ of the state-space system for the input $\Seq{u} = (\omega(\phi^t(p)))_{t \in \Z_-}$.
If we have access to the state $\seq{x}{t}$ at time $t$, then applying the transformation $h := \omega \circ \phi \circ \zeta^{-1}$ to $\seq{x}{t}$ yields the next observation $\seq{u}{t+1}$, that is, a perfect prediction.
In practice, the map $h$ will be learned from data, giving rise to an approximate prediction $\hat{h}(\seq{x}{t}) \approx h(\seq{x}{t}) = \seq{u}{t+1}$ \cite{RC26}.

This approximate prediction requires access to the state $\seq{x}{t}$.
The solution $\Seq{x}$ is an object indexed in the infinite past.
However, the actual implementation of $f$ begins at some point in time, say 0.
Since we do not know the solution $\Seq{x}$, we cannot initialize the system at $\seq{x}{0}$, but instead rely on a random initialization or an educated guess $\seq{x}{0}'$ \cite{RC26}.
Since the initial state $\seq{x}{0}'$ contains imperfect information about the dynamics we are trying to learn, it is the SFP that guarantees that $\seq{x}{t}'$, the actual state of the implemented system at time $t$, will be close to $\seq{x}{t}$ after a sufficient burn-in time after which $\hat{h}(\seq{x}{t}') \approx \hat{h}(\seq{x}{t}) \approx h(\seq{x}{t}) = \seq{u}{t+1}$.
In the language of \cite{AmigoEtal2024}, the SFP ensures that the implemented system is asymptotically synchronized.

\section{Unifying notions}
\label{sec_unifying}

To show that the newly presented definitions of state and input forgetting unify previous notions in the literature, we discuss the relevant definitions found in \cite{BoydChua1985, Jaeger2010, Manjunath2022Nonlin, RC9, TranRuefferKellett2019TAC}.

\subsection{Contracting and forgetting {\`a} la Jaeger}
\label{sec_Ja}

We point out that reachable states are what Jaeger \cite{Jaeger2010} called `end-compatible'.
More specifically, a state $x \in \Xc$ is {\bfi end-compatible} with an input sequence $\Seq{u} \in \US^-$ if there exists a solution $\Seq{x} \in \XS^-$ for the input $\Seq{u}$ with $\seq{x}{0} = x$.
The notions of state contracting, state forgetting, and input forgetting in \cite{Jaeger2010} are formulated with left- and right-infinite input sequences.
Although the maps $\psi_n$ formally take bi-infinite sequences as input, $\psi_n(x,\Seq{u})$ depends only on $\seq{u}{1},\dots,\seq{u}{n}$ and $\psi_n(x,T^n(\Seq{u}))$ depends only on $\seq{u}{-n+1},\dots,\seq{u}{0}$.
This allows us to express Jaeger's definitions of state contracting and state forgetting with the maps $\psi_n$.
To state his definition of input forgetting, let $\gamma^n \colon \US^- \times \US^- \rightarrow \US^-$, $n \in \N$, be the maps $\gamma^n(\Seq{u}',\Seq{u}) = (\dots,\seq{u}{-1}',\seq{u}{0}',\seq{u}{-n+1},\dots,\seq{u}{0})$.
Now, in the language of \cite{Jaeger2010}, the system is
\begin{itemize}
\item
state contracting if for every $\Seq{u} \in \US$ there exists some null-sequence $(\delta_n)_{n \in \N} \subseteq (0,1)$ such that for all $n \in \N$ and all states $x,x' \in \Xc$ it holds that $d_{\Xc}(\psi_n(x,\Seq{u}),\psi_n(x',\Seq{u})) < \delta_n$.
If the null-sequence does not depend on $\Seq{u}$, then the system is uniformly state contracting.
Unpacking the definition of the limit superior in \eqref{FP}, it is clear that this notion of state contracting is precisely the state-uniform SFP, and uniformly state contracting is exactly the uniform SFP.

\item
state forgetting if for every $\Seq{u} \in \US$ there exists some null-sequence $(\delta_n)_{n \in \N} \subseteq (0,1)$ such that for all $n \in \N$ and all states $x,x' \in \Xc$ it holds that $d_{\Xc}(\psi_n(x,T^n(\Seq{u})),\psi_n(x',T^n(\Seq{u}))) < \delta_n$.
As before, unpacking the definition of the limit superior in \eqref{sFP}, it is clear that this notion of state forgetting is precisely the state-uniform s-SFP.

\item
input forgetting if for every $\Seq{u}^0 \in \US$ there exists some null-sequence $(\delta_n)_{n \in \N} \subseteq (0,1)$ such that for all $n \in \N$, all $\Seq{u},\Seq{u}' \in \US^-$, and all states $x,x' \in \Xc$ that are end-compatible with $\gamma^n(\Seq{u},\tau(\Seq{u}^0))$ and $\gamma^n(\Seq{u}',\tau(\Seq{u}^0))$, respectively, it holds that $d_{\Xc}(x,x') < \delta_n$.
Note that $x$ being end-compatible with $\gamma^n(\Seq{u},\tau(\Seq{u}^0))$ is equivalent to the existence of a solution $\Seq{x}$ for the input $\Seq{u}$ such that $x = \psi_n(\seq{x}{0},T^n(\Seq{u}^0))$.
In particular, $\seq{x}{0} \in \Rc$ is reachable.
Thus, being input forgetting is equivalent to:
for every $\Seq{u}^0 \in \US$ there exists some null-sequence $(\delta_n)_{n \in \N} \subseteq (0,1)$ such that for all $n \in \N$ and all $x,x' \in \Rc$ it holds that $d_{\Xc}(\psi_n(x_0,T^n(\Seq{u}^0)) , \psi_n(x_0',T^n(\Seq{u}^0))) < \delta_n$.
This shows that this notion of input forgetting is precisely the state-uniform s-IFP.
\end{itemize}

\subsection{Forgetting {\`a} la Grigoryeva and Ortega}
\label{sec_GrOr}

The (uniform) state forgetting property introduced by Grigoryeva and Ortega in \cite{RC9} is exactly the (uniform) SFP we introduced in \cref{sec_forgetting}.
Their variant of the input forgetting property is formulated with the {\bfi induced functional}, which, assuming the state-space system has the ESP, is the unique map $\Hc \colon \US^- \rightarrow \Xc$ that assigns to every input sequence the time zero component of its unique solution, that is, $\Hc(\Seq{u}) = \seq{x}{0}$ for all $(\Seq{x},\Seq{u}) \in \Sc$.
In \cite{RC9}, the state-space system is said to have the (uniform) input forgetting property if for all $\Seq{u},\Seq{u}' \in \US$ with $\seq{u}{t} = \seq{u}{t}'$, $t \in \N$, (uniformly in $\Seq{u},\Seq{u}'$) $\lim_{n \rightarrow \infty} d_{\Xc}(\Hc \circ \tau \circ \sigma^n(\Seq{u}) , \Hc \circ \tau \circ \sigma^n(\Seq{u}')) = 0$.
Note that $\Hc \circ \tau \circ \sigma^n(\Seq{u}) = \psi_n(\Hc(\tau(\Seq{u})),\Seq{u})$ and $\Hc \circ \tau \circ \sigma^n(\Seq{u}') = \psi_n(\Hc(\tau(\Seq{u}')),\Seq{u})$.
The state $x' := \Hc(\tau(\Seq{u}'))$ is reachable.
Thus, the (uniform) input forgetting property in the sense of \cite{RC9} is equivalent to:
for all $\Seq{u} \in \US$ and all $x' \in \Rc$ (uniformly in $\Seq{u},x'$) $\lim_{n \rightarrow \infty} d_{\Xc}(\psi_n(\Hc(\tau(\Seq{u})),\Seq{u}) , \psi_n(x',\Seq{u})) = 0$.
By the triangle inequality, this is further equivalent to:
for all $\Seq{u} \in \US$ and all $x,x' \in \Rc$ (uniformly in $\Seq{u},x,x'$) $\lim_{n \rightarrow \infty} d_{\Xc}(\psi_n(x,\Seq{u}) , \psi_n(x',\Seq{u})) = 0$.
This is precisely the (uniform) IFP we introduced in \cref{sec_forgetting}.

\subsection{Attraction {\`a} la Manjunath}

Attractors have long played a central role in dynamical systems theory and come in many flavors \cite{Milnor1985}.
Here, we review the choice of definition made in Manjunath's work \cite{Manjunath2022Nonlin}, which discussed links between echo states, fading memory, and attractors, and is highly relevant to our discussion.
In \cite{Manjunath2022Nonlin}, the state-space system is said to have the {\bfi uniform attracting property} if for every $\Seq{u} \in \US$ there exists a solution $\Seq{x}(\Seq{u}) \in \XS^-$ for the input $\tau(\Seq{u})$ such that
\begin{equation*}
\begin{split}
	\lim_{n \rightarrow \infty} \sup_{t \in \Z_-} \sup_{\Seq{u} \in \US} \sup_{x \in \Xc} d_{\Xc}\left( \psi_n(x,T^{n-t}(\Seq{u})) , \seq{x}{t}(\Seq{u}) \right)
	= 0.
\end{split}
\end{equation*}
Note that if the uniform attracting property holds, then the ESP must hold and $\seq{x}{t}(\Seq{u}) = \Hc \circ \tau \circ T^{-t}(\Seq{u}) = \psi_n(\Hc \circ \tau \circ T^{n-t}(\Seq{u}),T^{n-t}(\Seq{u}))$.
Since $\US$ is shift-invariant, the supremum over $\Seq{u} \in \US$ can be substituted by the supremum over $T^{n-t}(\Seq{u}) \in \US$.
Thus, the uniform attracting property is equivalent to having the ESP and
\begin{equation*}
\begin{split}
	\lim_{n \rightarrow \infty} \sup_{\Seq{u} \in \US} \sup_{x \in \Xc} d_{\Xc}\left( \psi_n(x,\Seq{u}) , \psi_n(\Hc \circ \tau(\Seq{u}),\Seq{u}) \right)
	= 0.
\end{split}
\end{equation*}
From this it is clear that the ESP and uniform SFP together imply the uniform attracting property, and the reverse implication also holds by the triangle inequality.

\subsection{Steady states {\`a} la Boyd, Chua, and Green}

The notion of steady-states or steady-state solutions dates back to the classical works in linear filtering and control \cite{Kalman1960,KalmanBucy1961} but the precise definitions vary between different sources and earlier works preferred the terminology of asymptotic stability \cite{KalmanBertram1960a,KalmanBertram1960b}.
Here, we use the definition from the work of Chua and Green \cite{ChuaGreen1976} and of Boyd and Chua \cite{BoydChua1985} due to the high impact that they had on the field, partly because they were the first to establish a formal link to fading memory.
Their definitions are stated for systems in continuous time but are easily translated to a discrete time setting.
It should be noted that \cite{ChuaGreen1976,BoydChua1985} only consider bounded solutions.
Thus, to recover their definitions exactly we assume $\Xc$ to be bounded in this subsection.
Steady states are defined through asymptotic convergence of solutions in forward time.
When the state equation is considered in forward time only, solutions are uniquely determined by the maps $\psi_n$ once the initial condition at time zero is fixed.
Now, the state-space system is said to have the {\bfi unique steady-state property} if for all $\Seq{u} \in \US$ and all reachable states $x,x' \in \Rc$ it holds that $\lim_{n \rightarrow \infty} d_{\Xc}(\psi_n(x,\Seq{u}),\psi_n(x',\Seq{u})) = 0$.
This is precisely the IFP.

\subsection{Incremental stability {\`a} la Demidovich}

Whereas Boyd, Chua, and Green used steady states in relation to fading memory and Volterra series approximations, another strand of control-theory literature considered them in the context of Lyapunov methods \cite{Angeli2002}.
The foundation for this is sometimes attributed to the early work of Demidovich \cite{EfimovEtal1978, LevitanPapush1966, PavlovEtal2004}, but the topic gained greater traction in the past three decades.
In this line of work, a central notion linking steady states and Lyapunov methods is incremental stability.
Originally treated in the continuous time case, it has recently appeared in the discrete time setup as well \cite{JungersShakibWouw2024, PavlovWouw2012, TranRuefferKellett2019TAC}.
Incremental stability is almost the same as the SFP but is routinely defined through the notion of a function of class $\Lc$.
We recall that a function $\beta \colon [0,\infty)^2 \rightarrow [0,\infty)$ is of class $\Lc$ if, for any fixed $s \in [0,\infty)$, the function $\beta(s,\cdot)$ is continuous, strictly decreasing, and satisfies $\limsup_{t \rightarrow \infty} \beta(s,t) = 0$.
The state-space system is said to be {\bfi asymptotically incrementally stable (AIS)} if for all $\Seq{u} \in \US$ and $x,x' \in \Xc$ there exists a function $\beta$ of class $\Lc$ such that for all $n \in \N$
\begin{equation*}
	d_{\Xc}( \psi_n(x,\Seq{u}) , \psi_n(x',\Seq{u}) )
	\leq \beta(d_{\Xc}(x,x'),n).
\end{equation*}
Let us call the state-space system {\bfi state-uniformly (uniformly) AIS} if the function $\beta$ can be taken independently of $x,x'$ (and $\Seq{u}$) and satisfies in addition $\limsup_{t \rightarrow \infty} \sup_{r \in [0,s]} \beta(r,t) = 0$.
With these definitions, the AIS implies the SFP, and the state-uniform (uniform) SFP implies the state-uniform (uniform) AIS.\footnote{
To see that the state-uniform (uniform) SFP implies the state-uniform (uniform) AIS, consider $\beta$ defined on points $(s,N) \in [0,\infty) \times \N$ as follows and extend it to $[0,\infty) \times [0,\infty) \backslash \N$ in a way that yields a function of class $\Lc$ (in the state-uniform case one drops the supremum over $\Seq{u}$);
\begin{equation*}
	\beta(s,N)
	:= \sup_{n \geq N} \sup_{\Seq{u} \in \US} \sup_{\substack{x,x' \in \Xc\\d_{\Xc}(x,x')=s}} d_{\Xc}( \psi_n(x,\Seq{u}) , \psi_n(x',\Seq{u}) ).
\end{equation*}}
Conversely, if the state space $\Xc$ is bounded, then the state-uniform (uniform) AIS also implies the state-uniform (uniform) SFP.

We caution that the exact definitions of the AIS and the uniform AIS vary across sources.
The order of the quantifiers in the definitions should be checked carefully in each work.
For example, in \cite{JungersShakibWouw2024}, it is required that the inequality
\begin{equation*}
	d_{\Xc}( \psi_n(x,\sigma^m(\Seq{u})) , \psi_n(x',\sigma^m(\Seq{u})) )
	\leq \beta(d_{\Xc}(x,x'),n)
\end{equation*}
holds with $\beta$ possibly dependent on $\Seq{u} \in \US$ but independent of $m \in \Z$.
Furthermore, the function $\beta$ is sometimes required to be of class $\Kc$ as well, that is, for any fixed $t \in [0,\infty)$, the function $\beta(\cdot,t)$ needs to be continuous, strictly increasing, and satisfy $\beta(0,t) = 0$.

\subsection{Further equivalences}

In a similar spirit to the previous identifications, the next lemma highlights equivalent characterizations of the (shifted) SFP and IFP in the presence of the ESP.

\begin{lemma}
\label{lem_IFP}
	Suppose the state-space system has the ESP, and let $\Hc \colon \US^- \rightarrow \Xc$ be the induced functional that satisfies $\Hc(\Seq{u}) = \seq{x}{0}$ for all solutions $(\Seq{x},\Seq{u}) \in \XS^- \times \US^-$.
Then, the state-space system has
\begin{enumerate}[\upshape (i)]\itemsep=0em
\item\label{item_IFP_GO}
the (uniform) IFP if and only if for all $\Seq{u},\Seq{u}' \in \US$ with $\seq{u}{t} = \seq{u}{t}')$, $t \in \N$, (uniformly in $\Seq{u},\Seq{u}'$)
\begin{equation*}
	\limsup_{n \rightarrow \infty} d_{\Xc}( \Hc \circ \tau \circ \sigma^n(\Seq{u}) , \Hc \circ \tau \circ \sigma^n(\Seq{u}') ) = 0.
\end{equation*}

\item\label{item_IFP_Jaeger}
the (state-uniform) [uniform] s-IFP if and only if for all $\Seq{u},\Seq{u}',\Seq{u}'' \in \US^-$ (uniformly in $\Seq{u}',\Seq{u}''$) [uniformly in $\Seq{u},\Seq{u}',\Seq{u}''$]
\begin{equation*}
	\limsup_{n \rightarrow \infty} d_{\Xc}( \Hc \circ \gamma^n(\Seq{u}',\Seq{u}) , \Hc \circ \gamma^n(\Seq{u}'',\Seq{u}) ) = 0.
\end{equation*}

\item\label{item_fUAP}
the (state-uniform) [uniform] SFP if and only if for all $\Seq{u} \in \US$ and $x \in \Xc$ (uniformly in $x$) [uniformly in $\Seq{u},x$]
\begin{equation*}
	\limsup_{n \rightarrow \infty} d_{\Xc}( \psi_n(x,\Seq{u}) , \Hc \circ \tau \circ \sigma^n(\Seq{u}) ) = 0.
\end{equation*}

\item\label{item_UAP}
the state-uniform (uniform) s-SFP if and only if for all $\Seq{u} \in \US$ and $x \in \Xc$ uniformly in $x$ (uniformly in $\Seq{u},x$)
\begin{equation*}
	\limsup_{n \rightarrow \infty} d_{\Xc}( \psi_n(x,T^n(\Seq{u})) , \Hc \circ \tau(\Seq{u}) ) = 0.
\end{equation*}
\end{enumerate}
\end{lemma}

\begin{proof}
	\eqref{item_IFP_GO}
This was shown in \cref{sec_GrOr}.

\noindent\eqref{item_IFP_Jaeger}
Since $\Hc \circ \gamma^n(\Seq{u}',\Seq{u})$ is the unique state that is end-compatible with $\gamma^n(\Seq{u}',\Seq{u})$, this item is shown as in the last bullet point in \cref{sec_Ja}.

\noindent\eqref{item_fUAP}
One direction of the equivalence follows from the triangle inequality and the other one from the fact that $\Hc \circ \tau \circ \sigma^n(\Seq{u}) = \psi_n(\Hc \circ \tau(\Seq{u}),\Seq{u})$.

\noindent\eqref{item_UAP}
As for \eqref{item_fUAP}, one direction of the equivalence follows from the triangle inequality and the other one from the fact that $\Hc \circ \tau(\Seq{u}) = \psi_n(\seq{x}{-n},T^n(\Seq{u}))$.
However, note that the equivalence does not remain valid without the attribute `state-uniform' because the argument $\seq{x}{-n}$ of $\psi_n$ depends on $n$.
\end{proof}

Using this characterization, we present one final result.
Suppose the ESP holds.
\cref{lem_ESP_SFP} guarantees the state-uniform s-SFP if $\XS^-$ is compact.
Since the FMP is also implied by compactness of $\XS^-$ in the presence of the ESP, one can use the FMP instead as a weaker assumption.
In this case, we still get the state-uniform s-IFP.
Here, it is crucial that we used the product topology to define the FMP.

\begin{proposition}
	Suppose $\Uc$ is metrizable and the state-space system has the ESP and the FMP.
Then, the state-space system has the state-uniform s-IFP.
\end{proposition}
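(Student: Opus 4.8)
The plan is to reduce the claim to the characterization of the state-uniform s-IFP in \cref{IFP}\eqref{item_IFP_Jaeger}, and then to exploit that, in the product topology, the glued input sequences appearing there collapse onto a single point \emph{uniformly}. I would first record that the FMP yields continuity of the induced functional $\Hc \colon \US^- \rightarrow \Xc$, since $\Hc$ is the composition of the solution map $\US^- \rightarrow \XS^-$ (continuous by the FMP) with the coordinate projection $\XS^- \rightarrow \Xc$, $\Seq{x} \mapsto \seq{x}{0}$, which is continuous for the product topology. By \cref{IFP}\eqref{item_IFP_Jaeger} it then suffices to show that, for every fixed $\Seq{u} \in \US^-$,
\begin{equation*}
	\limsup_{n \rightarrow \infty} \sup_{\Seq{u}',\Seq{u}'' \in \US^-} d_{\Xc}\bigl( \Hc \circ \gamma^n(\Seq{u}',\Seq{u}) , \Hc \circ \gamma^n(\Seq{u}'',\Seq{u}) \bigr) = 0 .
\end{equation*}

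Fixing such a $\Seq{u}$, the key observation I would use is that $\gamma^n(\Seq{v},\Seq{u})$ agrees with $\Seq{u}$ on the coordinates $-n+1,\dots,0$ for \emph{every} $\Seq{v} \in \US^-$, so all these sequences converge to the single point $\Seq{u}$ in the product topology as $n \rightarrow \infty$, uniformly in $\Seq{v}$. To make this quantitative I would use metrizability of $\Uc$: pick a metric $d$ on $\Uc$ and set $\rho(\Seq{a},\Seq{b}) = \sum_{j \geq 0} 2^{-j} \min\{1,d(\seq{a}{-j},\seq{b}{-j})\}$, a metric inducing the product topology on $\Uc^{\Z_-}$; since $\gamma^n(\Seq{v},\Seq{u})$ and $\Seq{u}$ differ only in coordinates indexed by $-j$ with $j \geq n$, one gets $\rho(\gamma^n(\Seq{v},\Seq{u}),\Seq{u}) \leq 2^{-n+1}$ for all $\Seq{v} \in \US^-$. (Metrizability is a convenience only: the same uniform estimate holds because every product-topology neighborhood of $\Seq{u}$ constrains only finitely many coordinates.)

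The final step is then to feed this into continuity of $\Hc$ \emph{at the single point} $\Seq{u}$ --- and no compactness will be needed. Given $\epsilon > 0$, choose $\delta > 0$ with $\rho(\Seq{a},\Seq{u}) < \delta \Rightarrow d_{\Xc}(\Hc(\Seq{a}),\Hc(\Seq{u})) < \epsilon/2$, and then $N \in \N$ with $2^{-N+1} < \delta$. For every $n \geq N$ and every $\Seq{v} \in \US^-$ this forces $d_{\Xc}(\Hc \circ \gamma^n(\Seq{v},\Seq{u}),\Hc(\Seq{u})) < \epsilon/2$, whence, by the triangle inequality, $\sup_{\Seq{u}',\Seq{u}'' \in \US^-} d_{\Xc}(\Hc \circ \gamma^n(\Seq{u}',\Seq{u}),\Hc \circ \gamma^n(\Seq{u}'',\Seq{u})) \leq \epsilon$ for all $n \geq N$. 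So the limit superior above is $\leq \epsilon$ for every $\epsilon > 0$, hence $0$, and \cref{IFP}\eqref{item_IFP_Jaeger} delivers the state-uniform s-IFP.

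I do not expect a genuine obstacle here; the proof is short once \cref{IFP} is in hand. The one point that deserves care --- and the conceptual heart of the statement --- is the source of the \emph{state}-uniformity: it does not come from uniform continuity of $\Hc$ (which would require $\US^-$ compact), but from the fact that the whole family $\{\gamma^n(\Seq{v},\Seq{u})\}_{\Seq{v} \in \US^-}$ shrinks to the single point $\Seq{u}$ at a rate independent of $\Seq{v}$, so that plain continuity of $\Hc$ at that point suffices. This is exactly where using the product topology in the FMP is essential, as it is the topology in which agreement on the most recent coordinates forces proximity. The only bookkeeping to keep in mind is that $\gamma^n(\Seq{v},\Seq{u})$ indeed lies in $\US^-$, which is guaranteed by the standing assumptions on $\US$ and $\US^-$ and is implicit in the well-definedness of the $\gamma^n$ in \cref{sec_Ja}.
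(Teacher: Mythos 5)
Your proposal is correct and follows essentially the same route as the paper: reduce to the characterization in \cref{IFP}.\eqref{item_IFP_Jaeger}, observe that $\gamma^n(\Seq{v},\Seq{u})$ converges to $\Seq{u}$ in a product-topology metric at a rate independent of $\Seq{v}$, and conclude from continuity of $\Hc$ at the single point $\Seq{u}$ plus the triangle inequality. The only cosmetic difference is your choice of a summed rather than a weighted-sup metric for the product topology, and your (correct) side remark that metrizability is dispensable.
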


\begin{proof}
	Let $d_{\Uc}$ be a metric that metrizes the topology on $\Uc$.
Then, the product topology on $\US^-$ is metrized by $d_{\US^-}(\Seq{u}',\Seq{u}) = \sup_{t \in \Z_-} 2^t \min\{ 1 , d_{\Uc}(\seq{u}{t}',\seq{u}{t}) \}$.
In particular, $d_{\US^-}(\gamma^n(\Seq{u}',\Seq{u}),\Seq{u}) \leq 2^{-(n+1)}$ for any $\Seq{u},\Seq{u}' \in \US^-$ and $n \in \N$.
Now, given any $\Seq{u} \in \US^-$ and $\epsilon > 0$, the FMP yields some $\delta > 0$ such that $d_{\Xc}(\Hc(\Seq{u}'),\Hc(\Seq{u})) < \epsilon/2$ for all $\Seq{u}' \in \US^-$ that satisfy $d_{\US^-}(\Seq{u}',\Seq{u}) < \delta$.
Take $N \in \N$ so that $2^{-(N+1)} < \delta$.
Then, $d_{\Xc}(\Hc \circ \gamma^n(\Seq{u}',\Seq{u}),\Hc(\Seq{u})) < \epsilon/2$ for any $\Seq{u}' \in \US^-$ and $n \geq N$.
\cref{lem_IFP}.\eqref{item_IFP_Jaeger} and the triangle inequality yield the state-uniform s-IFP.
\end{proof}

\section{Conclusion}

This work has provided a unified framework for understanding the various notions of memory in RNNs, including steady states, echo states, state forgetting, input forgetting, and fading memory.
By clarifying the relationships between these concepts, we have shed light on the intricate dynamics of RNNs and their memory capacities.
Our results have established new implications and equivalences between these notions, providing a deeper understanding of how RNNs create reliable input/output responses, and have provided shorter proofs for previous related results.
Future research can build upon this foundation to further explore the properties and limitations of RNNs, ultimately leading to more efficient and effective models for a wide range of applications.

\acks{%
The authors acknowledge partial financial support from the School of Physical and Mathematical Sciences of the Nanyang Technological University.
The second author was partially funded by an Eric and Wendy Schmidt AI in Science Postdoctoral Fellowship at the Nanyang Technological University.
}

\bib{acm}{bibfile_FR}

\end{document}